\documentclass[letterpaper]{article} 
\usepackage{aaai2027}  
\usepackage[hyphens]{url}  
\usepackage{graphicx} 
\usepackage{natbib}  
\usepackage{caption} 
\usepackage{algorithm}
\usepackage{algorithmic}

\usepackage{newfloat}
\usepackage{listings}
\DeclareCaptionStyle{ruled}{labelfont=normalfont,labelsep=colon,strut=off} 
\floatstyle{ruled}
\newfloat{listing}{tb}{lst}{}
\floatname{listing}{Listing}

\usepackage{booktabs}
\usepackage{amsmath}
\usepackage{amssymb}
\usepackage{amsthm}

\newtheorem{theorem}{Theorem}
\newtheorem{corollary}{Corollary}

\title{RCShift: Certifying When Partial Linkage Suffices for\\
Finite-Sample Decisions}

\author{
Shuheng Cao$^{1,*}$,
Ruiqi Chen$^{2,*}$,
Zhenhao Zhang$^{4,6,\dagger}$,
Renjie Cao$^{3,\dagger}$,
Siyu Zhang$^{1,\dagger}$,
Lingwei Dang$^{5,\dagger}$,
Jiajun Zhang$^{6,\dagger}$,
Tingting Dan$^{7,\ddagger}$
}

\affiliations{
\begin{tabular}{@{}c@{}}
$^{1}$University of California, San Diego
\quad
$^{2}$University of Michigan, Ann Arbor
\quad
$^{3}$Boston College
\\[0.2em]
$^{4}$ShanghaiTech University
\quad
$^{5}$South China University of Technology
\quad
$^{6}$Tsinghua University
\\[0.2em]
$^{7}$University of North Carolina at Chapel Hill
\\[0.45em]
{\small
$^{*}$Co-first authors
\qquad
$^{\dagger}$Equal contribution
\qquad
$^{\ddagger}$Corresponding author
}
\end{tabular}
}

\begin{document}

\maketitle

\begin{abstract}
Systems with costly gold outcomes and cheaper auxiliary observations must
decide how much record linkage to retain.  Complete pairing retains every joint
counter, while separate margins retain none.  Neither endpoint is calibrated
to a declared finite-sample decision.  Universal reconstruction can retain
cycle directions invisible to the likelihood-ratio family.  Family-exact
storage can exceed what the decision requires because certified residual loss
may fit within finite-sample slack.  We introduce RCShift, which
certifies two routes to sufficiency under a declared observation contract.  Its
exact mode characterizes minimum-cost family-exact storage through LR-visible
cycle directions.  Its approximate mode bounds reverse Le Cam deficiency.  Its
integer mode certifies whether a chosen set preserves the full experiment's
minimum integer record count at specified size and power.  In a rank-two
witness, one aligned counter preserves a four-record minimum.  An equal-cost
misaligned counter and the margins require eleven records, while universal
reconstruction requires two counters.  A local perturbation has positive
reverse deficiency yet retains the four-record minimum.  Proof-checked
scheduling bounds instantiate the contract before gold computation and yield
exact reconstruction on the admitted tree support.  RCShift turns
partial-linkage storage into decision-calibrated measurement design for the
declared family, costs, target, and common strictly positive support.
\end{abstract}


\section{Introduction}

Certification systems often pair expensive gold outcomes with cheaper
auxiliary observations.  The auxiliary stream may contain verifier decisions,
interval certificates, automated scores, or structured assays.  Both streams
concern the same records, but complete record-level linkage can be costly or
unavailable.  Identifiers may be removed, streams may arrive asynchronously,
or only selected joint aggregates may persist.  Complete paired tables and
separate margins are endpoints of a broader storage design space.  When do
selected joint counters already suffice for the declared finite-sample
decision?

This design problem is weaker than table reconstruction.  Margins define
transportation fibers, and cycles generate their unresolved table moves
\citep{sullivant2004gaps,slavkovic2014fibers}.  Universal reconstruction must
resolve every cycle.  A certification decision depends only on the declared
likelihood-ratio (LR) family's LR-visible cycle directions.  A retained counter
is valuable when it resolves one of these directions.
Counter value therefore depends on alignment with an LR-visible direction, not
on counter count alone.

Classical sufficiency and experiment comparison determine whether a given
statistic preserves an experiment
\citep{blackwell1953equivalent,lecam1964sufficiency,kay2000sufficiency}.
Task-aware representation learning asks which information is usable by a
predictive family and constructs representations relative to that family
\citep{xu2020usable,dubois2020decodable}.  Active feature acquisition chooses
costly measurements for predictive decisions, including nongreedy policies
for jointly informative features
\citep{li2021active,valancius2024nongreedy}.  These lines make information
value task dependent, but they do not separate family-exact storage from
finite-decision preservation after margins are fixed.  The unresolved design
problem is to choose costed joint counters and certify which sets preserve the
exact minimum integer record count at specified size and power.

We formalize this problem as \emph{certification-preserving linkage storage}
and introduce RCShift.  The observation contract fixes the support, released
margins, candidate counters, costs, declared laws, and decision target before
final gold outcomes are observed.  RCShift answers the central question through
two reductions.  Family-exact storage retains the LR-visible cycle directions.
When exactness fails, finite-decision preservation can still be certified if
residual loss fits within full-experiment decision slack.  Three certificate
modes implement these reductions.  The exact mode characterizes minimum-cost
family-exact storage.  The approximate mode bounds reverse deficiency.  The
integer mode applies the slack test.  A passing set preserves the full
experiment's minimum integer record count.  The storage requirements remain
nested.  Universal reconstruction implies family-exact storage, which implies
finite-decision preservation for the declared target.  Full LR cycle rank
returns the family-exact storage budget to universal reconstruction under unit
counter costs.

Controlled evidence tests both reductions.  Under unit costs, changing only
counter alignment gives
\begin{equation}
\begin{aligned}
&b_{\rm preserve}=b_{\rm task}=1<2=b_{\rm universal},\\
&K_{\rm full}=K_{\rm aligned}=4<11=K_{\rm misaligned}=K_{\rm margins}.
\end{aligned}
\end{equation}
The aligned counter preserves four records, while the misaligned counter and
margins require eleven.  A local perturbation retains the four-record minimum
despite positive reverse deficiency.  A full-rank control identifies
saturation.  Proof-checked scheduling bounds instantiate the observation
contract.

The paper makes three contributions.
\begin{enumerate}
\item \textbf{Direction-aware linkage storage.}  We formulate partial-linkage
retention as costed measurement design and characterize minimum-cost
family-exact storage through LR-visible cycle directions.
\item \textbf{Finite-sample loss certification.}  RCShift bounds reverse
deficiency and certifies when residual loss preserves the full experiment's
minimum integer record count under the stated transfer condition.
\item \textbf{Strict separations and proof-checked instantiation.}  A
seven-edge witness establishes controlled direction and threshold separations
with a full-rank boundary.  Proof-checked scheduling bounds instantiate the
observation contract and yield exact tree reconstruction.
\end{enumerate}

\section{Related Work}

\paragraph{Contingency geometry and data linkage.}
Fixed-margin contingency methods characterize transportation fibers and the
moves that preserve sufficient statistics
\citep{diaconis1998algebraic,sullivant2004gaps,slavkovic2014fibers}.
Record-linkage methods instead infer identity correspondences across sources
through probabilistic rules, graphs, or embeddings
\citep{fellegi1969theory,albakri2015linkage,trisedya2019entity,
li2020grapher,liu2022matching}.  RCShift takes fixed margins and candidate
counters as the observation contract, then selects aggregates that preserve a
declared decision family.  This design object differs from identity recovery
and universal table reconstruction.

\paragraph{Task-aware sufficiency and representation.}
Classical sufficiency uses factorization, likelihood ratios, and linear
summaries to identify information preserved for a statistical family
\citep{kay2000sufficiency,hara2012hierarchical}.  Blackwell comparison and Le
Cam deficiency extend this question to exact and approximate experiment
simulation
\citep{blackwell1953equivalent,lecam1964sufficiency}.  Information bottleneck
methods and neural mutual-information objectives construct compressed
predictive representations
\citep{alemi2017deep,wu2020graph,belghazi2018mine,hjelm2019learning,
poole2019variational,tschannen2020mutual}.  Predictive $\mathcal V$-information
and decodable bottlenecks make information value relative to a predictive family
\citep{xu2020usable,dubois2020decodable}.  RCShift instead selects costed
joint counters conditional on released margins.  It connects
family-relative information to exact finite record requirements at specified
size and power.

\paragraph{Cost-aware information acquisition.}
Feature selection and arbitrary-conditioning models represent compact or
partially observed feature sets
\citep{balin2019concrete,li2020acflow}.  Active acquisition chooses measurements
under costs and partial observations through information gain or
predictive utility
\citep{ma2019eddi,gong2019icebreaker,li2021active}.  Nongreedy acquisition also
captures feature sets whose value is joint rather than separable
\citep{valancius2024nongreedy}.  RCShift fixes cohort-level joint counters before
final gold outcomes, conditional on released margins.  Balanced gain-graph
deletion gives the exact scalar reduction
\citep{bowlin2012frustration}, and the row-span condition verifies every returned
set.  Strict counter complementarity also shows why singleton scores can miss
an informative pair.  RCShift therefore returns storage certificates for a
fixed cohort rather than sequential acquisition policies.

\paragraph{Learning and certifying scheduling.}
Learning-based combinatorial optimization develops learned search, routing, and
scheduling policies
\citep{khalil2017learning,li2018combinatorial,gasse2019exact,kwon2020pomo,
zhang2020dispatch,wang2025memory}.  Certified combinatorial optimization instead
attaches independently checkable proofs to solver claims
\citep{gocht2021parity,bogaerts2022certified,vandesande2026branch}.  RCShift uses
feasible schedules and analytic lower bounds to form checked auxiliary
observations before gold computation.  These observations instantiate the
declared contract for the RCShift analysis.

Together, these lines provide fixed-margin geometry, family-relative
information, cost-aware acquisition, and checkable optimization outputs.  They
neither determine which costed joint counters preserve a declared finite-sample
threshold after margins are fixed nor separate family-exact storage from
finite-decision preservation.

\section{Problem Formulation and the RCShift Compiler}
\label{sec:cpls}

A partial-linkage design is sufficient for the declared finite-sample decision
when its storage experiment attains the full experiment's minimum integer
record count at the specified size and power.  We now formalize the observation
contract and this target.

\paragraph{Observation model.}
Let $G=(\mathcal Y\cup\mathcal F,E)$ be a finite bipartite support graph and
let $\{P_\theta:\theta\in\Theta_0\cup\Theta_1\}$ be strictly positive laws on
$E$.  From $t$ i.i.d. records, the full experiment $\mathcal E_t$ observes the
joint count vector $N\in\mathbb Z_+^E$.  For $S\subseteq E$, the storage
experiment $\mathcal E_t^S$ observes all row margins, all column margins, and
the selected joint counts $(N_e)_{e\in S}$.  We denote its integer measurement
matrix by $A_S$.  All linear spaces are over $\mathbb R$.  Throughout,
$t\in\mathbb N_{\geq1}$ and $0<\alpha<\beta<1$.  Randomized tests use
independent external randomness.  The common strictly positive support
excludes parameter-dependent structural zeros and boundary laws.

For a randomized test $\phi$, define the maximin power envelope at null size
\(\alpha\)
\begin{equation}
 \Pi_{\mathcal E,t}(\alpha)
 :=\sup_{\phi:\,\sup_{\theta\in\Theta_0}P_\theta^t\phi\leq\alpha}
       \inf_{\theta\in\Theta_1}P_\theta^t\phi,
 \label{eq:maximin-envelope}
\end{equation}
and let
\begin{equation}
 K_{\mathcal E}(\alpha,\beta)
 :=\min\{t\in\mathbb N_{\geq1}\mid
 \Pi_{\mathcal E,t}(\alpha)\geq\beta\}.
 \label{eq:integer-minimum}
\end{equation}
The minimum of an empty set is $+\infty$.  Since $\mathcal E_t^S$ is a
deterministic coarsening of $\mathcal E_t$,
$K_{\mathcal E^S}(\alpha,\beta)\geq K_{\mathcal E}(\alpha,\beta)$.
Given nonnegative counter costs $c_e$, the target design quantity is
\begin{equation}
 b_{\rm preserve}
 :=\min_S\left\{\sum_{e\in S}c_e\ \middle|\
 K_{\mathcal E^S}(\alpha,\beta)=K_{\mathcal E}(\alpha,\beta)\right\}.
 \label{eq:b-preserve}
\end{equation}

Also define the universal-reconstruction budget
\begin{equation}
 b_{\rm universal}:=\min_{\substack{S:\ A_S\text{ injective}\\
 \text{on every margin fiber}}}\ \sum_{e\in S}c_e.
 \label{eq:b-universal}
\end{equation}

\paragraph{Direction reduction.}
Released margins leave ambiguity along the support graph's cycle space.
Universal reconstruction must remove every residual cycle.  Certification
only needs to resolve the declared family's LR-visible cycle directions.
Each retained counter restricts one coordinate of this ambiguity.  RCShift
therefore treats linkage storage as decision-calibrated measurement design rather
than full table recovery.

\paragraph{Threshold reduction.}
Record budgets are integers.  A continuous information loss may have no
decision cost when the full test has slack.  The same loss may add several
records near a power threshold.  We therefore separate reverse deficiency
from the discrete output \(K_{\mathcal E^S}(\alpha,\beta)\).  Counter cost and
gold-record cost are also distinct resources.  The design objective asks which
counters preserve a separately specified minimum integer record count.
Three certificate modes implement these reductions. The exact mode
characterizes minimum-cost family-exact storage. The approximate mode bounds
reverse deficiency. The integer mode applies the slack test. A passing set
preserves the full experiment's minimum integer record count.
Figure~\ref{fig:rcshift-overview} summarizes the declared observation
contract, the exact and approximate certification routes, and the controlled
finite-sample separations.

\begin{figure*}[t]
    \centering
    \includegraphics[width=\textwidth]{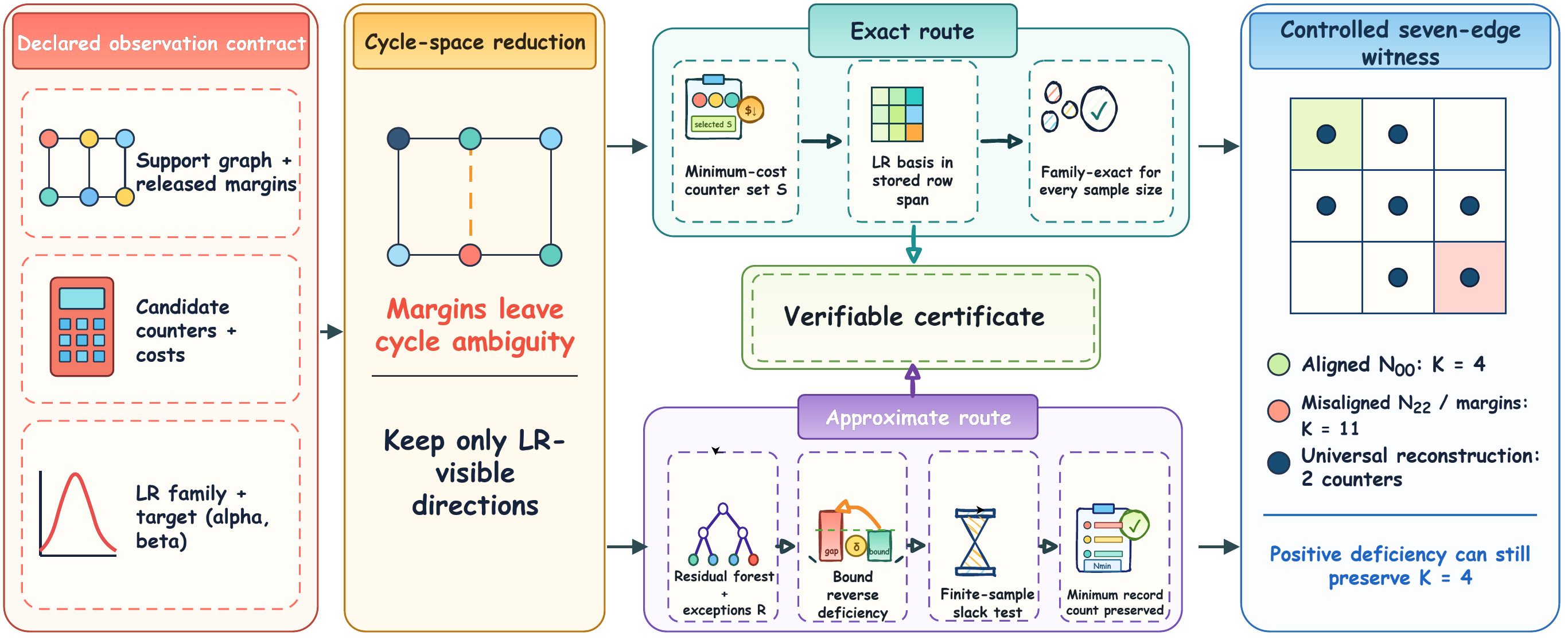}
    \caption{RCShift overview. }
    \label{fig:rcshift-overview}
\end{figure*}

Controlled evidence tests both reductions. Under unit costs, changing only
counter alignment gives
\begin{equation}
\begin{aligned}
&b_{\rm preserve}=b_{\rm task}=1<2=b_{\rm universal},\\
&K_{\rm full}=K_{\rm aligned}=4<11=K_{\rm misaligned}=K_{\rm margins}.
\end{aligned}
\end{equation}
Applications specify their retention costs through \(c_e\).

\paragraph{Information certificates.}
Fix $\theta_\star\in\Theta_0\cup\Theta_1$ and define the likelihood-ratio
subspace
\begin{equation}
 \mathcal L
 :=\operatorname{span}\left\{
 \left(\log\frac{P_\theta(e)}{P_{\theta_\star}(e)}\right)_{e\in E}:
 \theta\in\Theta_0\cup\Theta_1\right\}.
 \label{eq:lr-subspace}
\end{equation}
Let
\begin{equation}
 \epsilon_{S,t}
 :=\inf_T\sup_{\theta\in\Theta_0\cup\Theta_1}
 \left\|T P_{\theta,S}^t-P_\theta^t\right\|_{\rm TV}
 \label{eq:deficiency}
\end{equation}
be the reverse deficiency, where $T$ ranges over Markov kernels from
the stored experiment to the full experiment.

For $R\subseteq E\setminus S$ such that $G-(S\cup R)$ is a forest, set
\begin{equation}
 \begin{aligned}
 U_{S,t}(R)&:=\sup_\theta\{1-(1-P_\theta(R))^t\},\\
 U_{S,t}&:=\inf_R U_{S,t}(R).
 \end{aligned}
 \label{eq:upper-certificate}
\end{equation}
Fix a sound certificate map $u=(u_{S,t})$ with
$\epsilon_{S,t}\leq u_{S,t}$.  Equation~\eqref{eq:upper-certificate} gives
the default choice $u_{S,t}=U_{S,t}$.
For a binary subexperiment $(\theta_\star,\theta)$ and $c\geq0$, let
$D_c(P\|Q)=\sum_x[P(x)-cQ(x)]_+$ and define its contraction gap
\begin{equation}
 \begin{aligned}
 \Delta_{\theta,c}(S,t)
 &:=D_c(P_\theta^t\|P_{\theta_\star}^t)
  -D_c(P_{\theta,S}^t\|P_{\theta_\star,S}^t),\\
 L_{S,t}&:=\sup_{\theta,c}\frac{\Delta_{\theta,c}(S,t)}{1+c}.
 \end{aligned}
 \label{eq:lower-certificate}
\end{equation}
Write $\mathcal C=\ker(A_\varnothing)$.  For each edge, let
$r_e\in\mathcal C^\ast$ be the coordinate functional $r_e(z)=z_e$.  Let
$\mathcal W\subseteq\mathcal C^\ast$ be the subspace formed by restricting
$\mathcal L$ to $\mathcal C$.  Put
 $\mathcal R_S=\operatorname{span}\{r_e:e\in S\}$.

\paragraph{Certificate modes.}
Storage requirements are nested from universal reconstruction to family-exact
storage and finite-decision preservation.  RCShift evaluates them through three
certificate modes.  The exact mode is formalized in Parts 1 and 2, which
characterize direction retention and its minimum cost.  The approximate mode in
Part 3 bounds residual information loss.  The integer mode in Part 4 transfers
a sound loss bound to the finite-sample target.  Part 5 identifies full-rank
saturation.

\begin{theorem}[Certification-preserving linkage storage]
\label{thm:unified-storage}
Under the definitions above, the following statements hold.
\begin{enumerate}
 \item \textbf{Exact branch.}
 If
 \begin{equation}
   \mathcal L\subseteq\operatorname{rowspan}(A_S).
   \label{eq:rowspan-condition}
 \end{equation}
 then $\mathcal E_t^S$ and $\mathcal E_t$ are Blackwell equivalent for every
 $t\geq1$.  Their maximin envelopes and integer minima agree.  If the
 containment fails, equivalence fails for at least one finite $t$.

 \item \textbf{Exact dual compiler.}
 Counter sets yielding family-exact storage are characterized by
 \begin{equation}
   \mathcal W\subseteq\mathcal R_S,
   \qquad
   b_{\rm task}=\min_{S:\,\mathcal W\subseteq\mathcal R_S}
   \sum_{e\in S}c_e.
   \label{eq:dual-compiler}
 \end{equation}
 Counter value is nonseparable.  The monotone progress function
 $q(S)=\dim(\mathcal W\cap\operatorname{span}\{r_e:e\in S\})$ can have
 strict counter complementarity and need not be submodular.

 \item \textbf{Approximate branch.}
 Conditional on a supplied finite LR basis and evaluable family suprema, every
 $S$ satisfies the certificate sandwich
 \begin{equation}
   L_{S,t}\leq\epsilon_{S,t}\leq U_{S,t}.
   \label{eq:deficiency-sandwich}
 \end{equation}

 \item \textbf{Integer compilation.}
 Assume $t_\star=K_{\mathcal E}(\alpha,\beta)<\infty$.  Suppose a
 full-experiment test at $t_\star$ has size at most $a_0$ and worst-case
 alternative power at least $b_0$.  Write
 \(w=(\phi_\star,a_0,b_0,t_\star)\).  Set $\gamma_S(w,u)=1$ when
 $a_0+u_{S,t_\star}=0$.  Otherwise set
 $\gamma_S(w,u)=\min\{1,\alpha/(a_0+u_{S,t_\star})\}$.  If
 \begin{equation}
   \gamma_S(w,u)(b_0-u_{S,t_\star})\geq\beta,
   \label{eq:integer-transfer}
 \end{equation}
 then
 $K_{\mathcal E^S}(\alpha,\beta)=t_\star$.

 \item \textbf{Saturation.}
 Let $\ell=\dim\mathcal C$.  Under unit counter costs, if the restrictions of
 $\mathcal L$ span $\mathcal C^\ast$, then every storage set satisfying
 \eqref{eq:rowspan-condition} has size at least $\ell$.  A feedback-edge set
 of size $\ell$ attains equality.
\end{enumerate}
\end{theorem}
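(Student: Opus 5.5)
We prove the five parts separately. Parts 1 and 2 come from exponential-family sufficiency and linear duality on the cycle space, Part 3 from standard deficiency bounds, Part 4 from transferring a test through a Markov kernel, and Part 5 from a dimension count.

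\emph{Part 1.} For $t$ records with count vector $N$, the log-likelihood ratio is
\[
\log\frac{dP_\theta^t}{dP_{\theta_\star}^t}(N)=\langle \ell_\theta,N\rangle,
\qquad \ell_\theta\in\mathcal L .
\]
If $\mathcal L\subseteq\operatorname{rowspan}(A_S)$, each $\ell_\theta$ equals $A_S^\top\lambda_\theta$. Every such ratio is then a function of $A_SN$, so $A_SN$ is sufficient by factorization. Sufficiency gives Blackwell equivalence for all $t$, with the kernel being the $\theta$-free conditional law of $N$ given $A_SN$. Equal envelopes $\Pi$, and hence equal minima $K$, follow immediately.

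For the converse, suppose some $\ell_\theta\notin\operatorname{rowspan}(A_S)$. Then there is $z\in\ker A_S$ with $\langle\ell_\theta,z\rangle\neq0$. Take $z$ integral, which is possible since $A_S$ is an integer matrix, so its rational kernel is dense in its real kernel. Write $z=z^+-z^-$. For $t=\|z^+\|_1$, the counts $N=z^+$ and $N'=z^-$ are both feasible by strict positivity of the support. They share the same $A_S$-image but have different likelihood ratios, so $A_SN$ is not sufficient at this $t$. Since the family is dominated with positive densities, non-sufficiency means equivalence fails: a pairwise Blackwell equivalence would force pairwise sufficiency (Halmos--Savage).

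\emph{Part 2.} The row margins and $A_\varnothing$ fix $N$ modulo $\mathcal C$. By linear duality, $\operatorname{rowspan}(A_S)=\operatorname{rowspan}(A_\varnothing)+\operatorname{span}\{e^\ast:e\in S\}$. A vector $\ell$ lies in this space if and only if its restriction to $\mathcal C$ lies in $\mathcal R_S$, because $\operatorname{rowspan}(A_\varnothing)=\mathcal C^\perp$. This yields $\mathcal W\subseteq\mathcal R_S$, and the formula for $b_{\rm task}$ is just its minimization. Nonseparability is shown by example: on a single cycle with $\mathcal W$ spanned by $r_{e_1}+r_{e_2}$ restricted to a two-cycle space, neither $r_{e_1}$ nor $r_{e_2}$ alone contains $\mathcal W$ in its span, so $q=0$ for each singleton. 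The pair spans it, so $q=1$. This violates submodularity, since $q(\{e_1,e_2\})-q(\{e_1\})>q(\{e_2\})-q(\varnothing)$.

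\emph{Part 3.} For the upper bound, fix $R$ with $G-(S\cup R)$ a forest. Build $T$ as follows. On the event that no record falls in $R$, the counts on the forest edges are determined by the margins and the $S$-counts, since forest edges are pinned by degree peeling. Those counts then determine $N$ exactly, and $T$ outputs it. Otherwise $T$ outputs arbitrarily. The total-variation error is at most $P_\theta(\text{some record in }R)=1-(1-P_\theta(R))^t$. Taking the supremum over $\theta$ and the infimum over $R$ gives $\epsilon_{S,t}\le U_{S,t}$.

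For the lower bound, $D_c$ is the hockey-stick divergence, which satisfies the data-processing inequality under any kernel. It is also Lipschitz in the first argument with a TV constant, plus $c$ times a TV constant in the second. Combining gives
\[
D_c(P_\theta^t\|P_{\theta_\star}^t)\le D_c(TP_{\theta,S}^t\|TP_{\theta_\star,S}^t)+(1+c)\epsilon
\le D_c(P_{\theta,S}^t\|P_{\theta_\star,S}^t)+(1+c)\epsilon,
\]
which yields $L_{S,t}\le\epsilon_{S,t}$. The main care point in this part is matching the TV normalization, so that the Lipschitz constants are exactly $1$ and $c$.

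\emph{Part 4.} Let $\phi_\star$ be the full test at $t_\star$ and define $\psi=\gamma\,(\phi_\star\circ T)$, where $T$ nearly attains $u$. For any $\theta$, $|P_{\theta,S}(\phi_\star\circ T)-P_\theta\phi_\star|\le u$. The null size of $\psi$ is therefore at most $\gamma(a_0+u)\le\alpha$, and its power is at least $\gamma(b_0-u)\ge\beta$. Hence $K_{\mathcal E^S}\le t_\star$. The coarsening inequality gives $K_{\mathcal E^S}\ge K_{\mathcal E}=t_\star$, so equality holds. The infimum in $\epsilon$ may not be attained. We handle this by using $u>\epsilon$ slack or compactness: the kernels between finite spaces form a compact set, so the infimum is attained.

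\emph{Part 5.} If $\mathcal W=\mathcal C^\ast$, then $\mathcal R_S\supseteq\mathcal C^\ast$ forces $|S|\ge\dim\mathcal C^\ast=\ell$. For attainment, take $S$ to be the complement of a spanning forest. The coordinates of these $\ell$ feedback edges restrict to a basis of $\mathcal C^\ast$, because each fundamental cycle has coordinate $1$ on exactly one non-forest edge. The main obstacle overall is the converse in Part 1, which requires a feasible integer witness and passing from non-sufficiency to failure of Blackwell equivalence.
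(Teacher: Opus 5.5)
Your proposal is correct and follows the paper's proof part by part. The ingredients match: factorization plus an integer kernel move with equal positive and negative totals for Part 1, annihilator duality on $\mathcal C$ for Part 2, the leaf-elimination coupling and hockey-stick data processing for Part 3, kernel emulation with randomized thinning for Part 4, and a dimension count with a spanning-forest complement for Part 5. Your compactness remark in Part 4 also settles attainment of the infimum in $\epsilon_{S,t}$, which the paper leaves implicit.

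Two things need tidying. First, your nonsubmodularity example needs $r_{e_1},r_{e_2}$ to be linearly independent in a cycle space of dimension at least two, not ``a single cycle.'' The paper's seven-edge witness has this form, with $\mathcal W=\operatorname{span}\{r_{11}-r_{22}\}$. Second, the step from non-sufficiency to failure of Blackwell equivalence in Part 1 is better justified by equality in the data-processing inequality for KL than by citing Halmos--Savage.
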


\begin{proof}
For any $\theta$, the full count likelihood ratio relative to $\theta_\star$
is
\begin{equation}
 \frac{P_\theta^t(N)}{P_{\theta_\star}^t(N)}
 =\exp\!\left(
 \left\langle
 \log(P_\theta/P_{\theta_\star}),N
 \right\rangle\right),
 \label{eq:count-lr}
\end{equation}
because the multinomial coefficient cancels between the normalized count
laws.  This ratio is constant on every $A_S$-fiber
for every $\theta$ exactly when every vector in $\mathcal L$ annihilates
$\ker(A_S)$, equivalently when \eqref{eq:rowspan-condition} holds.  The
factorization criterion gives sufficiency.  The stored statistic is a
deterministic function of the full table, which proves Blackwell equivalence.
Conversely, because $A_S$ is integral, its real kernel has a rational basis.
Clearing denominators supplies integer kernel moves.  An integer vector in
$\ker(A_S)$ whose pairing with an LR vector is
nonzero has positive and negative parts with equal total count.  Strict
positivity makes these an attainable same-statistic pair at some $t$ with
different likelihood ratios.  Thus universal equivalence fails.

Within $\mathcal C$, storing $S$ leaves
$\ker(A_S)=\bigcap_{e\in S}\ker(r_e)$.  The annihilator of this intersection
is $\operatorname{span}\{r_e:e\in S\}$, so the exact condition from Part 1 is
equivalent to \eqref{eq:dual-compiler}.  Non-submodularity already occurs in
the seven-edge graph of Corollary~\ref{cor:strict-storage}.  In the cycle basis
$c_1=(1,-1,-1,1,0,0,0)$ and
$c_2=(0,0,0,1,-1,-1,1)$, its LR target is
$\mathcal W=\operatorname{span}\{(1,0)\}$.  The coordinate functionals for
edges $(1,1)$ and $(2,2)$ are $(1,1)$ and $(0,1)$.  Hence $q$ is zero on each
singleton but one on their union.  This violates the submodular inequality.

We use $\|P-Q\|_{\rm TV}=\sup_A|P(A)-Q(A)|=\frac12\|P-Q\|_1$.
For the upper inequality in \eqref{eq:deficiency-sandwich}, define the
parameter-independent kernel $T_{S,R,t}$.  It subtracts stored cells from the
margins and applies leaf elimination on $G-(S\cup R)$.  An inconsistent input
is mapped to a fixed legal count vector with total count $t$.  Couple the full
sample to the event that no record falls in $R$.  On this event, leaf
elimination recovers the full table.  The failure probability under $\theta$
is $1-(1-P_\theta(R))^t$.  Optimizing over $R$ and taking the worst parameter
proves the upper bound.

For the lower inequality, suppose a kernel reconstructs every family member
within TV error $\eta$.  Hockey-stick divergence is data-processing monotone
and changes by at most $(1+c)\eta$ when both endpoints move by at most $\eta$.
Hence
\begin{equation}
 D_c(P_\theta^t\|P_{\theta_\star}^t)
 \leq D_c(P_{\theta,S}^t\|P_{\theta_\star,S}^t)+(1+c)\eta.
\end{equation}
Rearranging, then optimizing over $c,\theta$ and kernels, proves the lower
bound.

For \eqref{eq:integer-transfer}, a deficiency kernel emulates the full test
with null size at most $a_0+u_{S,t_\star}$ and worst-case power at least
 $b_0-u_{S,t_\star}$.  Independent randomized thinning by $\gamma_S(w,u)$ in
\eqref{eq:integer-transfer} restores size $\alpha$.  The displayed condition
therefore gives $K_{\mathcal E^S}\leq t_\star$.  Deterministic coarsening gives
the reverse inequality.

Finally, storing $s$ coordinates imposes at most $s$ independent constraints
on $\mathcal C$, so $s<\ell$ leaves a nonzero residual cycle.  Full dual rank
provides an LR vector that detects it, contradicting
\eqref{eq:rowspan-condition}.  A feedback-edge set leaves a forest and hence
achieves exactness with $\ell$ counters.
\end{proof}

\paragraph{Certified and exact decision budgets.}
Fix a full-test witness
$w=(\phi_\star,a_0,b_0,t_\star)$.  Define its certified budget under $u$ by
\begin{equation}
b_{\rm cert}(w,u):=
\min_S\left\{\sum_{e\in S}c_e\ \middle|\
\gamma_S(w,u)(b_0-u_{S,t_\star})\geq\beta\right\}.
\label{eq:b-cert}
\end{equation}
Theorem~\ref{thm:unified-storage} gives
$b_{\rm preserve}\leq b_{\rm cert}(w,u)$.  Here $b_{\rm task}$ is the minimum
cost of family-exact storage, while $b_{\rm cert}(w,u)$ is the minimum cost that
passes the fixed sufficient certificate.  Exact finite enumeration separately
identifies $b_{\rm preserve}$ in the seven-edge instance.

\section{Certifying the Two Reductions}

\paragraph{Exact graph form.}
For a two-point experiment with edge log-LR vector \(L\), a residual move \(z\)
is decision-invisible exactly when \(\langle L,z\rangle=0\).  The unstored gain
graph therefore needs to be balanced, not acyclic.  For an LR basis
\(B_1,\ldots,B_d\), exactness requires potentials
\(u_y^{(k)},v_f^{(k)}\) that satisfy
\begin{equation}
B_k(y,f)=u_y^{(k)}+v_f^{(k)}
\quad
\text{for every }(y,f)\in E\setminus S
\text{ and every }k.
\label{eq:simultaneous-balance}
\end{equation}
Minimum-cost family-exact storage is simultaneous weighted gain frustration,
whose scalar special case is NP-hard.  Universal reconstruction is weighted
feedback-edge deletion.  An ILP or branch-and-cut solver can propose \(S\).
Equation~\eqref{eq:rowspan-condition} verifies family-exact storage for the
returned set.

\paragraph{Counter complementarity.}
The dual form in Eq.~\eqref{eq:dual-compiler} identifies the design primitive.
Each retained counter contributes a coordinate functional toward the
LR-visible target subspace.  These contributions can be strict complements, so
a singleton ranking can miss a pair that spans a target direction.

\paragraph{Approximate decision control.}
For approximate storage, let $R$ contain low-mass edges whose removal, together
with stored coordinates, leaves a forest.  If no sampled record uses $R$, the
paired table is reconstructible.  This gives the upper certificate in
Eq.~\eqref{eq:upper-certificate}.  It is uniform over the declared family.
The hockey-stick contraction detects atoms that cross a likelihood-ratio
threshold after aggregation.  It certifies strict information loss.  Adjacent
NP crossings determine whether that loss changes the minimum integer record
count.

\subsection{Compiler Procedure}

The theorem induces a procedure that separates counter search from claim
verification.

\paragraph{Step 1. Contract and decision.}
Specify the structural support \(E\), retained margins, candidate counters,
costs, parameter families, and \((\alpha,\beta)\).  Structural zeros belong to
the observation contract rather than a finite pilot table.

\paragraph{Step 2. Exact family compilation.}
Exact mode accepts a finite LR basis \(B\) with rational or symbolic entries
and a certified zero test.  It checks each candidate \(S\) by exact
elimination using
\begin{equation}
\operatorname{rank}\begin{pmatrix}A_S\\B\end{pmatrix}
=\operatorname{rank}(A_S).
\label{eq:rank-verifier}
\end{equation}
A passing exact set yields family-exact storage for the declared family.
Equation~\eqref{eq:simultaneous-balance} gives the graph form.

\paragraph{Step 3. Approximate certification.}
For a set that fails exactness, search for exception sets $R$ such that
$G-(S\cup R)$ is a forest.  Evaluate the worst-family mass upper certificate
$U_{S,t}$ at the relevant record counts.  Separately compute or bound the
hockey-stick contraction $L_{S,t}$.  The pair gives a certified interval for
reverse deficiency.

\paragraph{Step 4. Integer decision compilation.}
Compute a full-test witness \(w=(\phi_\star,a_0,b_0,t_\star)\).  Apply
Eq.~\eqref{eq:integer-transfer}.  A passing set has
\(K_{\mathcal E^S}(\alpha,\beta)=t_\star\) because coarsening supplies the
reverse inequality.
For small finite experiments, exact reduced-experiment maximin computation or
NP enumeration can calculate $b_{\rm preserve}$ directly.

\paragraph{Step 5. Certificate verification.}
For a proposed set \(S\), the compiler returns
\(\mathsf C=(S,\mathsf{mode},B,\pi_u,\pi_L,w,\pi_{\rm opt})\).
The verifier checks Eq.~\eqref{eq:rank-verifier} in exact mode.  The upper
witness \(\pi_u=(R,m)\) verifies the forest condition, \(m\in[0,1]\),
\(m\geq\sup_\theta P_\theta(R)\), and
\(u_{S,t_\star}=1-(1-m)^{t_\star}\).  The optional witness
\(\pi_L=(t,\theta,c,\Delta)\) verifies a hockey-stick gap and certifies
\(\epsilon_{S,t}\geq\Delta/(1+c)\).
Approximate acceptance also checks \(w\) and
Eq.~\eqref{eq:integer-transfer}.  Exact acceptance proves equivalence for every
\(t\), while approximate acceptance proves
\(K_{\mathcal E^S}(\alpha,\beta)=t_\star\).  A claim of minimum cost
additionally requires \(\pi_{\rm opt}\) or exhaustive finite enumeration.

\section{Controlled Finite-Sample Separations}
\label{sec:strict-witness}

\paragraph{Direction separation at equal cost.}
The control holds counter cost fixed and changes only LR alignment.  Consider
the seven-edge support
\begin{equation}
E=\{00,01,10,11,12,21,22\},
\end{equation}
which has cycle rank two.  The null and alternative weights are
\begin{align}
W_0&=(8,16,16,8,8,1,8), & \sum W_0&=65,\\
W_1&=(16,8,8,16,16,1,8),& \sum W_1&=73.
\end{align}
The normalized likelihood ratio satisfies
\begin{equation}
\log\frac{P_1}{P_0}
=\log\frac{65}{73}\,\mathbf 1+(\log2)L,
\qquad
L=(1,-1,-1,1,1,0,0).
\end{equation}
The constant vector belongs to the margin row span.  The positive scale
\(\log2\) preserves cycle balance and likelihood-ratio ordering.  All stored
experiments retain both margins.  The aligned experiment also retains
$N_{00}$.  The same-cost misaligned experiment retains $N_{22}$.  The margins
experiment retains no joint counter.

After observing margins and $N_{00}$, the only residual cycle has alternating
contrast
\begin{equation}
L_{11}-L_{12}-L_{21}+L_{22}=1-1-0+0=0.
\end{equation}
Hence the aligned statistic is exactly sufficient.  Storing $N_{22}$ instead
leaves a first-cycle contrast
\begin{equation}
L_{00}-L_{01}-L_{10}+L_{11}=4,
\end{equation}
so equal counter cardinality does not imply equal decision value.

\paragraph{Exact finite direction certificate.}
For \(i\in\{0,1\}\) and \(z=A_Sn\), the exact atom probability of the stored
statistic is
\begin{equation}
p_{i,S}^{(t)}(z)=
\sum_{\substack{n\in\mathbb Z_+^E\\
\mathbf1^\top n=t,\ A_Sn=z}}
\frac{t!}{\prod_{e\in E}n_e!}
\prod_{e\in E}P_i(e)^{n_e}.
\label{eq:aggregated-atom}
\end{equation}
Every term is rational.  Group equal likelihood ratios into blocks
\(H_1,\ldots,H_m\) ordered from largest to smallest.  Write
\(h_{i,j}=\sum_{z\in H_j}p_{i,S}^{(t)}(z)\).  Set \(A_0=C_0=0\) and
\begin{equation}
\begin{aligned}
A_k&=\sum_{j\leq k}h_{0,j},&
C_k&=\sum_{j\leq k}h_{1,j},\\
k_\alpha&=\min\{k:A_k\geq\alpha\},&
\rho_\alpha&=\frac{\alpha-A_{k_\alpha-1}}
{A_{k_\alpha}-A_{k_\alpha-1}}.
\end{aligned}
\label{eq:np-boundary}
\end{equation}
The NP rule rejects the null on preceding blocks and rejects on
\(H_{k_\alpha}\) with probability \(\rho_\alpha\).
The exact Neyman--Pearson frontier is
\begin{equation}
\Pi_{\mathcal E^S,t}(\alpha)
=C_{k_\alpha-1}+\rho_\alpha(C_{k_\alpha}-C_{k_\alpha-1}).
\label{eq:np-frontier}
\end{equation}
The boundary denominator is positive under strict positivity.  For the full
and aligned experiments, the decisive powers are
\begin{equation}
p_3=\frac{109850}{389017}<.30,
\qquad
p_4=\frac{9542093}{28398241}>.30.
\label{eq:full-crossing}
\end{equation}

At \(\alpha=.05\) and \(\beta=.30\), exact enumeration gives
\(K_{\rm full}=K_{\rm aligned}=4<11=
K_{\rm misaligned}=K_{\rm margins}\).
Table~\ref{tab:exact} reports the certified adjacent frontiers.
\begin{table}[t]
\centering
\small
\begin{tabular}{lccc}
\toprule
Experiment & $K$ & $K-1$ upper & $K$ lower\\
\midrule
Full & 4 & .282378406 & .336010001\\
Aligned $N_{00}$ & 4 & .282378406 & .336010001\\
Misaligned $N_{22}$ & 11 & .299068452 & .310488554\\
Margins & 11 & .293856253 & .304398798\\
\bottomrule
\end{tabular}
\caption{Certified outward enclosures of exact rational NP frontiers.  Upper
entries are rounded upward and lower entries downward.}
\label{tab:exact}
\end{table}

Uniform deletion of one record defines a Markov kernel from every
size-\(t+1\) stored experiment to its size-\(t\) counterpart.  Each randomized
power envelope is therefore nondecreasing in \(t\).  The adjacent crossings
in Table~\ref{tab:exact} establish all four minima.  Boundary decisions use
independent external randomization.

The aligned counter preserves the full four-record minimum, whereas
the equal-cost \(N_{22}\) counter needs eleven records, matching margins.
Among singleton counters, exactly \(N_{00},N_{01},N_{10}\) yield family-exact
storage.  Thus \(b_{\rm preserve}=b_{\rm task}=1\), while universal
reconstruction requires two counters.

\paragraph{Misalignment loss certificate.}
For \(N_{22}\) at \(t=2\), exact aggregation with
\(c=4225/5329\) gives \(\Delta_c=384/5329\).  Therefore
\begin{equation}
\epsilon_{\{22\},2}\geq
\frac{\Delta_c}{1+c}
=\frac{192}{4777}.
\label{eq:strict-contraction}
\end{equation}

\paragraph{Threshold separation under positive deficiency.}
Positive information loss need not increase the integer minimum.  Endpoint
perturbations of at most \(\delta\) in one-record TV induce size-\(t\) product
perturbations of at most \(t\delta\).  Randomized thinning preserves the target
crossing when
\begin{equation}
\delta\leq
\frac{\alpha(p_t-\beta)}{t(\alpha+\beta)}.
\label{eq:tv-above}
\end{equation}
A one-fewer crossing with power \(p_t<\beta\) remains below the target when
\begin{equation}
\delta<
\frac{\alpha(\beta-p_t)}{t(\alpha+p_t)}.
\label{eq:tv-below}
\end{equation}
The smaller exact radius exceeds
\(1.3343345\times10^{-5}>10^{-5}\), with the displayed decimal lower bound
rounded downward.

An off-subspace perturbation within this neighborhood makes the aligned
counter insufficient.  Keep \(P_0\) fixed and define
\begin{equation}
P_{1,\eta}(e)=
\frac{P_1(e)\exp\{\eta\mathbf1[e=22]\}}
{\sum_{e'}P_1(e')\exp\{\eta\mathbf1[e'=22]\}}.
\label{eq:off-subspace-tilt}
\end{equation}
Write \(\epsilon_{S,t}(\eta)\) for the reverse deficiency of
\((P_0,P_{1,\eta})\).
Consider the cycle basis \(c_1=(1,-1,-1,1,0,0,0)\) and
\(c_2=(0,0,0,1,-1,-1,1)\).  The corresponding coordinate restrictions for the
\(00\) and \(22\) cells are \(r_{00}=(1,0)\) and \(r_{22}=(0,1)\).  The
perturbation adds \(\eta r_{22}\), which is not observed by \(N_{00}\).
The two size-two tables
\begin{equation}
n^+=\mathbf1_{11}+\mathbf1_{22},
\qquad
n^-=\mathbf1_{12}+\mathbf1_{21}
\end{equation}
have identical margins and \(N_{00}\), but their perturbed likelihood ratios
differ by \(\eta\).  Adding two \(00\) records preserves this fiber mismatch at
size four.  The unequal likelihood ratios violate factorization on the stored
fiber.  The finite kernel polytope is compact and the TV objective is
continuous.  Hence
\(\epsilon_{\{00\},4}(\eta)>0\) for every \(\eta\ne0\).

Let \(\delta_\eta=\|P_{1,\eta}-P_1\|_{\rm TV}\).  Continuity gives
\(\eta_0>0\) such that \(0<|\eta|\leq\eta_0\) implies
\(0<\delta_\eta\leq10^{-5}\).  Applying the unperturbed sufficiency kernel and
the product-TV bound gives
\begin{equation}
0<\epsilon_{\{00\},4}(\eta)\leq8\delta_\eta.
\label{eq:positive-deficiency}
\end{equation}
For \(0<\delta_\eta\leq10^{-5}\), the size-three full frontier remains below
\(.30\).  The size-four base rule retains null size \(\alpha\) and has
alternative power at least \(p_4-4\delta_\eta\).  At size four,
Theorem~\ref{thm:unified-storage}(4) gives
\begin{equation}
\frac{\alpha}{\alpha+8\delta_\eta}
\left(p_4-12\delta_\eta\right)>.30.
\end{equation}
Hence the perturbed full and stored experiments both have a minimum integer
record count of four even though their reverse deficiency is positive.

\paragraph{Family saturation.}
To identify the family boundary, fix a strictly positive reference law
\(P_\star\).  Given a gain vector \(v_j\), define
\begin{equation}
P_j(e)=
\frac{P_\star(e)\exp\{v_j(e)\}}
{\sum_{e'}P_\star(e')\exp\{v_j(e')\}}.
\end{equation}
Set \(\Theta_0=\{\star\}\) and \(\Theta_1=\{1,2\}\).  The constant
normalizers vanish on the cycle space.  The pair
\(v_1=L\) and \(v_2=(0,1,1,0,0,0,0)\) has cycle-projection rank one, so a singleton
counter set yields family-exact storage.  Replacing \(v_2\) by
\((0,0,0,0,1,0,0)\) gives full cycle-projection rank and raises the exact
family budget to two.

\begin{corollary}[Strict storage and decision separation]
\label{cor:strict-storage}
For the seven-edge rank-two experiment in
Section~\ref{sec:strict-witness}, at
$\alpha=1/20$ and $\beta=3/10$, under unit counter costs $c_e=1$ for every
$e\in E$,
\begin{equation}
 \begin{aligned}
 &b_{\rm preserve}=b_{\rm task}=1<2=b_{\rm universal},\\
 &K_{\rm full}=K_{N_{00}}=4<11=K_{N_{22}}=K_{\rm margins}.
 \end{aligned}
 \label{eq:strict-result}
\end{equation}
The exact singleton choices are $N_{00},N_{01},N_{10}$.  The counter $N_{22}$
has the contraction certificate
$\epsilon_{\{(2,2)\},2}\geq192/4777$.
The integer ordering persists when each endpoint moves by at most $10^{-5}$
in one-record TV.  The off-subspace family in
Eq.~\eqref{eq:off-subspace-tilt} has positive deficiency and preserves the
four-record minimum for every sufficiently small nonzero parameter.
\end{corollary}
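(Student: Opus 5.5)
The proof assembles the pieces of Section~\ref{sec:strict-witness} and invokes Theorem~\ref{thm:unified-storage} for each claim.

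\textbf{Storage budgets.} I would first establish the storage budgets. Write the LR restriction in the cycle basis $c_1=(1,-1,-1,1,0,0,0)$, $c_2=(0,0,0,1,-1,-1,1)$. The pairings are $\langle L,c_1\rangle=4$ and $\langle L,c_2\rangle=1-1-0+0=0$, so $\mathcal W=\operatorname{span}\{(1,0)\}$. Each edge $e$ has functional $r_e=(c_1(e),c_2(e))$. These are $r_{00}=(1,0)$, $r_{01}=r_{10}=(-1,0)$, $r_{11}=(1,1)$, $r_{12}=r_{21}=(0,-1)$ and $r_{22}=(0,1)$. By Part~2, a singleton $S=\{e\}$ is family-exact exactly when $r_e$ is a nonzero multiple of $(1,0)$, which gives precisely $N_{00},N_{01},N_{10}$. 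Hence $b_{\rm task}=1$. Since $\mathcal W\ne0$, the empty set fails, and Part~1 then shows that margins alone are not equivalent.

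For $b_{\rm universal}$, injectivity on every margin fiber needs $\ker A_S\cap\mathcal C$ to contain no integer moves realized by fibers. Part~5's argument, with $\ell=2$, shows that any single functional leaves a nonzero rational residual cycle. Clearing denominators and using strict positivity then gives two distinct tables in one fiber. A feedback-edge pair such as $\{00,22\}$ leaves a forest, so $b_{\rm universal}=2$.

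\textbf{Record minima.} Next I would turn to the integer minima. Uniform record deletion is a Markov kernel, so each envelope is monotone in $t$, and the minimum is located by a single adjacent crossing. The Table~\ref{tab:exact} entries come from exact rational evaluation of Eq.~\eqref{eq:aggregated-atom} and Eq.~\eqref{eq:np-frontier}, and they bracket $.30$ at $K-1$ and $K$. This gives $K_{\rm full}=4$ via $p_3<.30<p_4$, $K_{N_{22}}=K_{\rm margins}=11$, and $K_{N_{00}}=4$. The aligned value also follows directly from Part~1, since the aligned set is family-exact.

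Combining the two steps, $b_{\rm preserve}\le 1$ because $N_{00}$ preserves the minimum. Also $b_{\rm preserve}\ge1$, because the empty set yields $K_{\rm margins}=11\neq4$. Hence $b_{\rm preserve}=1$.

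\textbf{Contraction certificate.} The bound $\epsilon_{\{22\},2}\ge192/4777$ is the lower half of Part~3. I would verify it by computing $D_c$ at $t=2$ with $c=4225/5329$, once on the full counts and once on the $N_{22}$-aggregated atoms, checking that the gap is $384/5329$, and dividing by $1+c=9554/5329$.

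\textbf{Robustness.} For the $10^{-5}$ persistence claim I would use the product bound $\|P^{t}-Q^{t}\|_{\rm TV}\le t\delta$. Randomized thinning, as in Part~4, yields Eq.~\eqref{eq:tv-above} and Eq.~\eqref{eq:tv-below} at each adjacent crossing of every row. The remaining task is to check that the minimum exact radius exceeds $10^{-5}$, using the margins $p_4-.30$, $.30-p_3$ and the analogous margins at $t=10,11$.

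For the tilted family, I would first show positivity. The tables $n^\pm$, padded with two $00$ records, share the stored statistic but have log-LR difference $\eta$. By the converse argument of Part~1, the factorization criterion fails, so no kernel is exact; compactness of the kernel polytope then gives $\epsilon>0$. For the upper bound $\epsilon\le8\delta_\eta$, I would apply the unperturbed sufficiency kernel and use $t=4$ on both endpoints. Part~4 with $a_0=\alpha$, $b_0=p_4-4\delta_\eta$ and $u=8\delta_\eta$ then gives $K=4$ for the stored experiment. The size-three frontier of the perturbed full experiment stays below $.30$ by Eq.~\eqref{eq:tv-below}.

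\textbf{Main obstacle.} I expect the hardest step to be the certified exact rational enumeration at $t=10,11$ for the misaligned and margins experiments. These fibers are large, and outward-rounded enclosures must be tight enough that the $10^{-5}$ radius still holds there. I would first try exact rational arithmetic with atoms grouped by LR block.
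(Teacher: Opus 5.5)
Your proposal follows the paper's proof essentially step for step, and the individual checks you plan are sound. The functionals $r_e=(c_1(e),c_2(e))$ single out exactly $N_{00},N_{01},N_{10}$, and they also show that the other four singletons fail, which the paper leaves implicit. The pair $\{00,22\}$ leaves a spanning tree. Under $N_{22}$, the only merged size-two atom is $\{\mathbf 1_{00}+\mathbf 1_{11},\,\mathbf 1_{01}+\mathbf 1_{10}\}$, which gives the gap $384/5329$. The binding radius $1.334\times10^{-5}$ is Eq.~\eqref{eq:tv-below} at the misaligned $t=10$ entry. Your tilted-family argument is the paper's. The problem is the step that you, like the paper, use to turn a single adjacent crossing into an integer minimum.

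Uniform deletion is a parameter-free kernel on the full count vector, but not on a coarsened statistic. To update the row margins, column margins and $(N_e)_{e\in S}$, you must delete cell $e$ with probability $N_e/(t+1)$, and for unstored cycle edges this is not a function of the stored statistic. On the size-two atom above, for instance, the table is $\mathbf 1_{00}+\mathbf 1_{11}$ with conditional probability $1/5$ under $P_0$ but $4/5$ under $P_1$. Monotonicity of coarsened envelopes can genuinely fail. Take a $2\times2$ support with $P_0=(.49,.01,.01,.49)$ and $P_1=(.01,.49,.49,.01)$ on $(00,01,10,11)$. At $\alpha=.05$ the margins envelope is about $.98$ at $t=1$ and about $.53$ at $t=2$.

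So the crossings in Table~\ref{tab:exact} give $K_{N_{22}},K_{\rm margins}\le 11$ and exclude $t=10$, but they do not exclude $t\in\{4,\dots,9\}$. In particular, your step $b_{\rm preserve}\ge1$ needs $\Pi_{\mathcal E^{\varnothing},4}(\alpha)<.30$, which has not been shown.

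The repair is finite. Deletion monotonicity is valid for the full experiment, and hence for the aligned one, which is equivalent to it at every $t$. Coarsening then gives $\Pi_{\mathcal E^S,t}(\alpha)\le p_3<.30$ for $t\le3$. For $t=4,\dots,10$ you must evaluate Eq.~\eqref{eq:np-frontier} exactly for both coarse experiments. Likewise, the $10^{-5}$ persistence claim needs Eq.~\eqref{eq:tv-below} at every such $t$ on the coarse rows. The certified radius is the minimum over those $t$, which may fall below the $t=10$ value.

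Several claims are unaffected: $b_{\rm task}$, $b_{\rm universal}$, $b_{\rm preserve}\le1$, $K_{\rm full}=K_{N_{00}}=4$ and the contraction certificate. The tilted-family claim is also unaffected, since it needs only full-experiment monotonicity, Part~4, and coarsening.
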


\begin{proof}
The residual cycle after storing any of $N_{00},N_{01},N_{10}$ is orthogonal
to the LR vector, so Theorem~\ref{thm:unified-storage}(1) applies.  The exact
atom and frontier formulas in Eqs.~\eqref{eq:aggregated-atom}
and~\eqref{eq:np-frontier} give the adjacent crossings in
Table~\ref{tab:exact}.  Uniform random deletion makes every power envelope
nondecreasing in \(t\).  The crossings therefore prove the four integer
minima and \(b_{\rm preserve}=1\).  The cycle rank proves
\(b_{\rm universal}=2\).  Equation~\eqref{eq:strict-contraction} gives the
strict lower bound.  Equations~\eqref{eq:tv-above} and
\eqref{eq:tv-below} give the TV radius.  Equation~\eqref{eq:positive-deficiency}
proves the final claim.
\end{proof}

With nonuniform costs, the numerical identities $1<2$ need not hold.  The
family-exact storage objective becomes weighted gain-frustration, while universal
reconstruction becomes weighted feedback-edge deletion.  The cost-based
definitions, rather than cardinality, govern their values.

\section{Proof-Checked Contract Instantiation}

Independently checked optimization bounds instantiate the auxiliary side of the
observation contract.  For each flexible job-shop
scheduling instance \(x\), the fixed family \(\mathcal B\) orders operations by
operation index and baseline start, then applies deterministic earliest repair.
Let \(M_{\mathcal B}^\star(x)\) denote the family optimum over \(\mathcal B\).

The upper certificate supplies a complete schedule \(s_U\in\mathcal B\).  The
checker verifies family membership and feasibility under the declared machine,
precedence, downtime, and policy constraints.  It then recomputes
\({\rm UB}(x)=C_{\max}(s_U)\), which gives
\(M_{\mathcal B}^\star(x)\leq{\rm UB}(x)\).

The lower certificate records job-chain, workload, and release-precedence
bounds \(L_{\rm chain},L_{\rm load},L_{\rm rel}\).  The checker independently
recomputes these values and sets
\({\rm LB}(x)=\max\{L_{\rm chain},L_{\rm load},L_{\rm rel}\}\).
Job precedence, machine capacity, and public release constraints imply these
bounds for every schedule in \(\mathcal B\).  Every certificate pair that passes
both independent checkers therefore satisfies
\begin{equation}
{\rm LB}(x)\leq M_{\mathcal B}^\star(x)\leq{\rm UB}(x).
\label{eq:fjsp-bounds}
\end{equation}
For integer makespans, the fixed bins are
\(I_0=(-\infty,24]\), \(I_1=[25,36]\), and \(I_2=[37,\infty)\).
Define the gold label \(Y(x)=Y_j\) when
\(M_{\mathcal B}^\star(x)\in I_j\).  The checked interval then produces the
following set-valued auxiliary output
\begin{equation}
F(x)=\{Y_j\mid I_j\cap[{\rm LB}(x),{\rm UB}(x)]\ne\varnothing\}.
\label{eq:fjsp-contract}
\end{equation}
A contract is verified when it passes both checkers.  Its instance is admitted
when \(A=F(x)\) contains one class or two adjacent classes.  Each admitted
instance is encoded as \(F_A\), and Eq.~\eqref{eq:fjsp-bounds} guarantees
\(Y(x)\in A\).  The cohort uses \(F_1,F_2,F_{01},F_{12}\) for its two singleton
and two adjacent outputs.

Of 36 deterministic cohort instances, 34 passed both checkers.  Gold labels were
available for every verified contract, and all 34 contracts covered their
labels.  The admission rule retained 32 instances.

For the admitted cohort, \(E_{\rm contract}\) includes observed and
contract-allowed pairs, including zero-count cells.  Its six edges connect three
gold and four auxiliary vertices as a tree, so released margins reconstruct
every paired count.  The cohort supplies a proof-checked contract
instantiation with exact tree reconstruction.  The i.i.d. construction
separately proves the direction and threshold separations.

\section{Discussion and Limitations}

\paragraph{Decision-calibrated storage.}
RCShift certifies finite-sample sufficiency in two regimes under a declared
observation contract.  Family-exact storage retains every LR-visible cycle
direction and ignores reconstructive directions that the declared family cannot
see.  When family exactness fails, finite-decision preservation can still hold
if certified residual loss remains within full-experiment slack.  These regimes
separate information relevance from the decision cost of residual loss.

\paragraph{Two reductions beyond reconstruction.}
The controlled witnesses distinguish the two regimes.  At equal cost,
alignment changes the minimum from four records to eleven, which isolates
direction value.  A positive-deficiency perturbation preserves the four-record
minimum, which isolates threshold value.  Both reductions are distinct from
universal reconstruction.

\paragraph{Connection to AI information design.}
Fixed-margin geometry identifies unresolved cycle directions, while task-aware
objectives value these directions relative to a declared family
\citep{hara2012hierarchical,xu2020usable}. Cost-aware acquisition selects
measurements under limited budgets and partial observations
\citep{ma2019eddi,li2021active}. Certified optimization supplies independently
checkable auxiliary observations from solver outputs
\citep{bogaerts2022certified}.

\paragraph{Computation and scope.}
Minimum-cost search can be NP-hard and requires an optimality witness or
exhaustive enumeration.  Under unit costs, full LR cycle rank reaches the
universal budget and marks family saturation.  The guarantees assume a declared
LR family, additive costs, common strictly positive support, an exact LR basis,
and evaluable deficiency suprema.  Integer transfer also requires finite
\(t_\star\) and the stated slack condition.  Scheduling concerns fixed
\(\mathcal B\) and its deterministic cohort.

\section{Reproducibility Statement}

All reported finite-enumeration quantities use exact rational arithmetic.
The supplementary package contains aggregated atom probabilities,
randomized NP boundary rules, and executable checks for every reported
frontier and contraction value.

\section{Conclusion}

RCShift establishes decision-calibrated partial-linkage storage under a declared
observation contract.  It characterizes family-exact retention through
LR-visible cycle directions and certifies finite-decision preservation under a
sound loss transfer.  In the strict witness, one aligned counter preserves the
four-record minimum, while an equal-cost counter and the margins require eleven.
Universal
reconstruction requires two counters.  A local perturbation preserves the
four-record minimum despite positive reverse deficiency.  These results show
that counter value is directional and that positive information loss need not
incur finite-sample decision cost.  The claims hold for the declared family,
costs, target, and common strictly positive support.

\bibliography{refs}


\end{document}